\definecolor{reportcolor}{HTML}{188781}
\definecolor{papercolor}{HTML}{536872}
\newcommand{\papermode}{\togglefalse{REPORT}\togglefalse{DEBUG}}
\newcommand{\ESPARQL}{eSPARQL}
\newcommand{\SPARQL}{SPARQL}
\newcommand{\RDF}{RDF}
\newcommand{\SPARQLStar}{SPARQL-star}
\newcommand{\RDFStar}{RDF-star}
\newmdenv[%
skipabove=0.5em,
skipbelow=0.5em,
outerlinewidth=1,
backgroundcolor=yellow!30
]{Daniel}
\newcommand{\FOUR}{\mathcal{FOUR}}
\newcommand{\FOURInf}{\mathcal{FOUR}_{\mathrm{in}}}
\newcommand{\FOURTruth}{\mathcal{FOUR}_{\mathrm{th}}}
\newcommand{\true}{{\top}}
\newcommand{\false}{{\bot}}
\newcommand{\unknown}{{\vdash}}
\newcommand{\conflicting}{{\dashv}}
\newcommand{\leqTruth}{\mathrel{\leq_{\operatorname{tr}}}}
\newcommand{\leqInf}{\mathrel{\leq_{\operatorname{in}}}}
\newcommand{\LeqTruth}[2]{{#1}\leqTruth{#2}}
\newcommand{\LeqInf}[2]{{#1}\leqInf{#2}}
\newcommand{\TruthMeet}{\owedge}
\newcommand{\TruthJoin}{\ovee}
\newcommand{\InfMeet}{\ogreaterthan}
\newcommand{\InfJoin}{\olessthan}
\newcommand{\Reduce}[2]{{#1}(#2)}
\newcommand{\Id}[1]{{\operatorname{Id}(#1)}}
\newcommand{\Absorbing}[1]{\operatorname{Abb}(#1)}
\newcommand{\AtomicBeliefQuery}[3]{[{#1},{#2},{#3}]}
\newcommand{\GraphContext}[2]{{#1|_{#2}}}
\newcommand{\Structure}[1]{\mathcal{#1}}
\newcommand{\sM}{\Structure{M}}
\newcommand{\sK}{\Structure{K}}
\newcommand{\SumStructure}[1]{+_{\Structure{#1}}}
\newcommand{\ProdStructure}[1]{\times_{\Structure{#1}}}
\newcommand{\SumM}{\SumStructure{M}}
\newcommand{\SumK}{\SumStructure{K}}
\newcommand{\ProdK}{\ProdStructure{K}}
\newcommand{\ZeroStructure}[1]{0_{\Structure{#1}}}
\newcommand{\OneStructure}[1]{1_{\Structure{#1}}}
\newcommand{\ZeroM}{\ZeroStructure{M}}
\newcommand{\ZeroK}{\ZeroStructure{K}}
\newcommand{\OneK}{\OneStructure{K}}
\newcommand{\SetIRIs}{\mathbf{I}}
\newcommand{\SetVars}{\mathbf{V}}
\newcommand{\AGraph}{G}
\newcommand{\Mappings}[1]{\Omega|_{#1}}
\newcommand{\variable}[1]{\texttt{{?}#1}}
\newcommand{\varX}{\variable{x}}
\newcommand{\varY}{\variable{y}}
\newcommand{\varDeity}{\variable{deity}}
\newcommand{\Individual}[1]{\mathsf{#1}}
\newcommand{\Jesus}{\Individual{Jesus}}
\newcommand{\Zeus}{\Individual{Zeus}}
\newcommand{\Pope}{\Individual{PopeDI}}
\newcommand{\Arius}{\Individual{Arius}}
\newcommand{\Christianity}{\Individual{Christianity}}
\newcommand{\Russell}{\Individual{Russell}}
\newcommand{\type}{\Individual{a}}
\newcommand{\FullDeity}{\Individual{FullDeity}}
\newcommand{\Christian}{\Individual{Christian}}
\newcommand{\believesToBeTrue}{\mathsf{believesToBeTrue}}
\newcommand{\believesToBeFalse}{\mathsf{believesToBeFalse}}
\newcommand{\believesToBeUnknown}{\mathsf{believesToBeUnknown}}
\newcommand{\believesToBeConflicted}{\mathsf{believesToBeConflicted}}
\newcommand{\triple}[3]{(#1, #2, #3)}
\newcommand{\ToBeTrue}[2]{\triple{#1}{\believesToBeTrue}{#2}}
\newcommand{\ToBeFalse}[2]{\triple{#1}{\believesToBeFalse}{#2}}
\newcommand{\ToBeUnknown}[2]{\triple{#1}{\believesToBeUnknown}{#2}}
\newcommand{\ToBeConflicted}[2]{\triple{#1}{\believesToBeConflicted}{#2}}
\newcommand{\aAND}{\mathbin{\mathsf{AND}}}
\newcommand{\aUNION}{\mathbin{\mathsf{UNION}}}
\newcommand{\aSELECT}{\operatorname{\mathsf{SELECT}}}
\newcommand{\aFILTER}{\mathbin{\mathsf{FILTER}}}
\newcommand{\qAND}[2]{(#1 \aAND #2)}
\newcommand{\qUNION}[2]{(#1 \aUNION #2)}
\newcommand{\qFILTER}[2]{(#1 \aFILTER #2)}
\newcommand{\qSELECT}[2]{(\aSELECT\,#1\,#2)}
\newcommand{\aFILTERe}[1]{\mathbin{\mathsf{FILTER}_{#1}}}
\newcommand{\aSELECTe}[1]{\mathsf{SELECT}_{#1}}
\newcommand{\aBELIEFe}{\mathsf{BELIEF}}
\newcommand{\aANDTruth}{\TruthMeet}
\newcommand{\aANDInf}{\InfMeet}
\newcommand{\aUNIONTruth}{\TruthJoin}
\newcommand{\aUNIONInf}{\InfJoin}
\newcommand{\aSTATEUPDATE}{\mathbin{\mathsf{MAP}}}
\newcommand{\qFILTERe}[3]{(#2 \aFILTERe{#1} #3)}
\newcommand{\qSELECTe}[3]{(\aSELECTe{#1}\;{#2}\;{#3})}
\newcommand{\qBELIEFe}[2]{(\aBELIEFe\;{#1}\;{#2})}
\newcommand{\qSTATEUPDATE}[4]{(#1 \aSTATEUPDATE (#2 ,#3, #4))}
\newcommand{\qANDTruth}[2]{(#1 \aANDTruth #2)}
\newcommand{\qANDInf}[2]{(#1 \aANDInf #2)}
\newcommand{\qUNIONTruth}[2]{(#1 \aUNIONTruth #2)}
\newcommand{\qUNIONInf}[2]{(#1 \aUNIONInf #2)}
\newcommand{\qBound}[1]{\operatorname{bound}(#1)}
\newcommand{\qState}[1]{\operatorname{state}(#1)}
\newcommand{\True}{\mathsf{true}}
\newcommand{\False}{\mathsf{false}}
\newcommand{\Error}{\mathsf{error}}
\newcommand{\kset}[2]{{\llparenthesis{#1}\rrparenthesis}_{#2}}
\newcommand{\sparqlEval}[2]{{\llbracket{#1}\rrbracket}_{#2}}
\newcommand{\st}{\;:\;}
\newcommand{\dom}{\operatorname{dom}}
\newcommand{\var}{\operatorname{var}}
\newcommand{\support}{\operatorname{supp}}
\newcommand{\inScope}{\operatorname{inScope}}
\newcommand\varitem[1]{\item[{U\arabic{enumi}\rlap{$#1$}}]%
  \edef\@currentlabel{U\arabic{enumi}{$#1$}}}
\definecolor{codegray}{rgb}{0.42,0.42,0.42}
\definecolor{backcolour}{rgb}{0.97,0.97,0.97}
\lstdefinestyle{mystyle}{
    backgroundcolor=\color{backcolour},   
    keywordstyle=\color{codegray},
    numberstyle=\tiny\color{codegray},
    basicstyle=\ttfamily\scriptsize,
    breakatwhitespace=false,         
    breaklines=true,                 
    captionpos=b,                    
    keepspaces=true,                 
    numbers=left,                    
    numbersep=5pt,                  
    showspaces=false,                
    showstringspaces=false,
    showtabs=false,                  
    tabsize=2,
}
\lstdefinelanguage{SPARQL}{
    keywords = {SELECT,WHERE,FILTER,INFO,FROM,BELIEF,a,MAP,IF,TO,ELSE,STATE,IS}
}
\begin{document}

\title{{\ESPARQL}: Representing and Reconciling Agnostic and Atheistic Beliefs \\ in RDF-star Knowledge Graphs}
\titlerunning{{\ESPARQL}: Representing and Reconciling Agnostic and Atheistic Beliefs}

\author{
  Xinyi Pan\inst{1}\orcidlink{0009-0002-7082-8156} \and%
  Daniel Hernandez\inst{1}\orcidlink{0000-0002-7896-0875} \and%
  Philipp Seifer\inst{2}\orcidlink{0000-0002-7421-2060} \and%
  Ralf Lämmel\inst{2}\orcidlink{0000-0001-9946-4363} \and%
  Steffen Staab\inst{1,3}\orcidlink{0000-0002-0780-4154}%
}
\authorrunning{
  Pan, Hernandez, Seifer, Lämmel, and Staab
}

\institute{
  Institute for Artifical Intelligence, University of Stuttgart, Germany \\
  \email{xinyi.pan@ms.informatik.uni-stuttgart.de} \\
  \email{\{daniel.hernandez, steffen.staab\}@ki.uni-stuttgart.de}
  \and%
  The Software Languages Team, University of Koblenz, Germany \\
  \email{\{pseifer, laemmel\}@uni-koblenz.de}
  \and%
  Web and Internet Science Research Group, University of Southampton, United Kingdom
}

\maketitle

\begin{abstract}
  Over the past few years, we have seen the emergence of large knowledge graphs combining information from multiple sources. Sometimes, this information is provided in the form of assertions about other assertions, defining contexts where assertions are valid. A recent extension to RDF which admits statements over statements, called {\RDFStar}, is in revision to become a W3C standard. However, there is no proposal for a semantics of these  {\RDFStar} statements nor a built-in facility to operate over them. In this paper, we propose a query language for epistemic {\RDFStar} metadata based on a four-valued logic, called {\ESPARQL}. Our proposed query language extends {\SPARQLStar}, the query language for {\RDFStar}, with a new type of \texttt{FROM} clause to facilitate operating with multiple and sometimes conflicting beliefs. We show that the proposed query language can express four use case queries, including the following features: (i) querying the belief of an individual, (ii) the aggregating of beliefs, (iii) querying who is conflicting with somebody, and (iv) beliefs about beliefs (i.e., nesting of beliefs).

  \keywords{SPARQL, RDF-star, knowledge graphs, epistemic querying}
\end{abstract}

\section{Introduction}%
\label{sec:introduction}

Over the past few years, we have seen the emergence of large knowledge graphs combining information from multiple sources, such as Wikidata~\cite{DBLP:conf/www/Vrandecic12}, YAGO~\cite{DBLP:conf/esws/TanonWS20}, and DBpedia~\cite{DBLP:conf/semweb/AuerBKLCI07}. These knowledge graphs provide information about a great variety of entities, such as people, countries, universities, as well as facts over these entities, which are codified as triples. For example, the fact that Albert Einstein was born in Germany can be encoded as a triple $\triple{\Individual{Einstein}}{\Individual{wasBorn}}{\Individual{Germany}}$. Triples like this are the information units of the Resource Description Framework (RDF)~\cite{rdf11}, a data model for representing information about World Wide Web resources, and SPARQL~\cite{SPARQL2013}, the query language proposed by the W3C to query RDF data.

RDF and SPARQL have been built on the simplifying assumption that statements (triples) are either \emph{true} or \emph{unknown}, leading to a representation of knowledge that can never conflict with each other, and thus can never lead to inconsistencies. To present competing statements, knowledge bases include reified statements without introducing them as a stated statement. For example, the statement ``Jesus is a deity'' is provided in a reified form in Wikidata, and annotated as ``supported by Christianity, Messianic Judaism, and Manichaeism'' but ``disputed by Islam, Atheism, and Judaism''. Acknowledging the need to make statements about statements, two recent extensions, namely RDF-star and SPARQL-star~\cite{rdfstar}, were proposed. However, these extensions do not deviate from the assumption that statements are either true or unknown.

To illustrate the need to work with statements that are \emph{true}, \emph{unknown}, \emph{false}, or \emph{conflicted}, let us consider some disagreements on beliefs in the Christian religion. Several councils were convened to establish consensus about different aspects of the nature of Jesus, his birth, his mother's birth, his father, and his existence before his birth. By stating a dogma, a council established what should be considered true and what is a heresy.
For example, one of these beliefs can be encoded in RDF-star as a triple where the last element is also a triple:
\[\ToBeTrue{\Pope}{\triple{\Jesus}{\type}{\FullDeity}}.\]

RDF-star does not provide a semantics for this statement, but we can assume that it means ``Pope Damasus the First believes that Jesus is a full Deity.'' Indeed, the Pope presided over the Council of Nice, where this statement became a dogma. This belief was not shared by all participants of the Council. Arius believed that Jesus was created by decision of God, and thus his deity is not as full as the deity of his father. Arius' belief can be encoded as follows:
\[\ToBeFalse{\Arius}{\triple{\Jesus}{\type}{\FullDeity}}.\]
The two statements above do not entail that ``Jesus is a full deity'' nor the contrary, but ascribe some different people's beliefs. Since we have contradictory opinions between Christians, we can say that this statement is conflicted according to Christianity:
\[\ToBeConflicted{\Christianity}{\triple{\Jesus}{\type}{\FullDeity}}.\]

On the other hand, the mathematician Bertrand Russell described his religious beliefs as follows: \emph{``In regard to the Olympic gods, speaking to a purely philosophical audience, I would say that I am an Agnostic. But speaking popularly, I think that all of us would say in regard to those gods that we were Atheists. In regard to the Christian God, I should, I think, take exactly the same line.''} Regarding the aforementioned statement, Russell's beliefs can be encoded as follows:
\[
  \begin{aligned}[t]
    &\ToBeUnknown{\Russell}{\triple{\Jesus}{\type}{\FullDeity}},\\
    &\ToBeUnknown{\Russell}{\triple{\Zeus}{\type}{\FullDeity}}.\\
  \end{aligned}
\]
Russell defines himself as agnostic, since he cannot decide whether the statement is false or true, and he also describes his beliefs about the beliefs of Christians who may be believers regarding Jesus, but atheists regarding an Olympic deity. This Russell's argument applied to $\Pope$ is encoded as follows:
\[
  \begin{aligned}[t]
    &\ToBeTrue{\Russell}{\ToBeTrue{\Pope}{\triple{\Jesus}{\type}{\FullDeity}}},\\
    &\ToBeTrue{\Russell}{\ToBeFalse{\Pope}{\triple{\Zeus}{\type}{\FullDeity}}}.\\
  \end{aligned}
\]

In order to deal with multiple views of the world, it becomes necessary to operate on sets of facts. For example, we would like to execute a SPARQL query against the set of all facts that are valid according to Christianity, or a given person. However, SPARQL does hardly support working with sets of facts. Indeed, SPARQL has an ambivalent attitude towards sets of facts, as it allows constructing such using \texttt{FROM}, but its main operators all work on graph pattern matching. Aside from the SPARQL \texttt{UPDATE} functionality, the only operators to do so are \texttt{FROM} and \texttt{FROM NAMED}. We propose to extend the \texttt{FROM} functionality of {\SPARQL} and call this extension {\ESPARQL} to establish actual sets of facts based on which triple pattern matching and downstream operations (\texttt{FILTER}, \texttt{AGGREGATE}, etc.) may be applied.

{\ESPARQL} allows constructing graphs in \texttt{FROM} clauses filtering triples with patterns that can include epistemic conditions. For example, the following {\ESPARQL} query asks for all full deities according to Christianity.
\begin{lstlisting}[language=SPARQL]
SELECT ?deity
FROM BELIEF <Christianity>
WHERE { ?deity a <FullDeity> }
\end{lstlisting}
\noindent
Intuitively, the clause \texttt{FROM BELIEF <Christianity>} generates a graph whose triples are marked as either \emph{true}, \emph{false}, \emph{unknown}, or \emph{conflicting}. These truth values represent the \texttt{<Christianity>} beliefs. The graph that results from extracting the Pope beliefs is then used for matching the triple pattern \texttt{?deity a <FullDeity>}, and returning the bindings for variable \texttt{?deity}. The answers for Jesus, his father God, and his mother Maria will be annotated as conflicted, true, and false, respectively, and individuals that are not mentioned will be annotated as unknown.

This paper makes the following contributions:
\begin{enumerate}
\item
  We describe use case requirements for an epistemic query language (Section~\ref{sec:UseCase}).
\item
  We provide a semantics for {\RDFStar} and {\SPARQLStar} as a $\sK$-annotated algebra, which operates over functions from {\SPARQLStar} solution mappings to $\sK$ elements (Section~\ref{sec:annotated-sparqlstar}). This algebra extends the Geerts et al.~$\sK$-annotated algebra~\cite{DBLP:journals/jacm/GeertsUKFC16}, which is used as the foundation for how-provenance in {\SPARQL}.
\item
  We propose a query language, {\ESPARQL}, an extension to the $\sK$-annotated {\SPARQLStar} where the abstract semiring structure $\sK$ is substituted by the concrete semirings in the $\FOUR$ structure (Section~\ref{sec:eSPARQL}).
\item
  We show that {\ESPARQL} can express the use case queries, and present an user interface for the {\ESPARQL} algebra (Section~\ref{sec:UseCaseDiscussion}).
\item
  We show that {\SPARQL} queries can lead to query results that cannot be encoded with finite expressions using the support of the solution (Section~\ref{sec:Safe-eSPARQL}). We propose an approach to deal with these infinite results.
\end{enumerate}


\section{Use Case Requirements}%
\label{sec:UseCase}

We have explored a broad range of queries that an epistemic query language for {\RDFStar} should be able to answer. We distil four query examples that this language must allow formulating easily. They require (i) to query for people's beliefs, (ii) to query for aggregated or integrated belief states, (iii) to query for subjects whose beliefs coincide, conflict, are compatible or ignorant of other belief sets, (iv) allow for nested belief states.

\newcommand{\usecase}[1]{\paragraph*{\textnormal{\textbf{#1}}}}

\usecase{Use Case U1 (Query for people's beliefs)}

We should be able to query people's beliefs regarding a given statement, and the query should return the bindings to the possible variables in the statement and the respective four-valued logic truth values for the variable bindings. The expression for these queries should be simple (i.e., do not include any algebraic operators like $\aAND$, $\aUNION$, $\aFILTER$, or $\aSELECT$).

\emph{Example: Return the full deities $\varX$ according to Pope Damasus the First.}
All returned individuals should be annotated with a belief value that indicates the belief of the Pope regarding the individual. For example, Jesus must be annotated as \emph{true} if the Pope believes Jesus is a full deity, \emph{false} if the Pope believes Jesus is not a full deity, \emph{unknown} if there is no information about the belief of the Pope regarding the nature of Jesus, and \emph{conflicted} if either the Pope believes this statement is conflicted, or the belief is both \emph{true} and \emph{false}.

\usecase{Use Case U2 (Query for combined belief values)}
We should be able to combine people's beliefs to represent the belief of a group, or operations between different people's beliefs.

\emph{Example: Return the beliefs of Christian people regarding the full deities.}
For example, Jesus must be annotated with true if all Christians think Jesus is a \emph{full} deity, \emph{false} if all Christians think Jesus is not a full deity, \emph{conflicted} if they have contradictory opinions, and unknown if no Christian has an opinion regarding the nature of Jesus.

\usecase{Use Case U3 (Query for people whose beliefs coincide, conflict, or are compatible)}
Given a set of statements, we should be able to query people who coincide (they all have the same belief for all the statement), conflict (some people believe a statement is true, whereas others believe it is false), or they are compatible (if a person believes a fact is true, the others can believe it is true or unknown).

\emph{Example: Return all people whose opinion is conflicted with Pope Damasus the First in at least one statement.}
For example, since Pope Damasus the First believes that Jesus is a full deity, and Arius believes it is not, then Arius should be returned (i.e., annotated with true). Since Russell is agnostic regarding an opinion of Pope Damasus the First, Russell is not in conflict with the Pope, so Russell must be annotated with false.

\usecase{Use Case U4 (Query nested belief states)}
We should be able to query about what people believe other people believe.

\emph{Example: Return all the people $x$ such that there exists a person $y$ that believes that $x$ believes that Zeus is not a full deity.}
In our example in the introduction, we must annotate Pope Damasus the First with true, because Russell believes the pope believes Zeus is not a full deity. Other individuals should be annotated with unknown.


\section{Preliminaries}%
\label{sec:Preliminaries}

\paragraph{\textbf{The $\FOUR$ structure.}}
A \emph{partially ordered set} (or \emph{poset}) is a set taken together with a partial order on it. Formally, a partially ordered set is defined as a pair $(A, \leq)$, where $A$ is called the \emph{ground set} of the poset and $\leq$ is the \emph{partial order} of the poset.

A poset $(A,\leq)$ defines a lattice if for every two elements $a,b \in A$, there is a unique maximum element $c \in A$ such that $c \leq a$ and $c \leq b$, called the \emph{meet} of $a$ and $b$, and there is a unique minimum element $c \in A$ such that $a \leq c$ and $b \leq c$, called the \emph{join} of $a$ and $b$.

We write $\FOUR$ to denote the set $\{ \false, \true, \unknown, \conflicting \}$, and we use \emph{$\FOUR$ elements} to refer to the elements in this set. We define the posets $(\FOUR, \leqTruth)$ and $(\FOUR, \leqInf)$ as the minimum posets satisfying the inequalities $\LeqTruth{\false}{\unknown}$, $\LeqTruth{\false}{\conflicting}$, $\LeqTruth{\unknown}{\true}$, $\LeqTruth{\conflicting}{\true}$, $\LeqInf{\unknown}{\false}$, $\LeqInf{\unknown}{\true}$, $\LeqInf{\false}{\conflicting}$, and $\LeqInf{\true}{\conflicting}$. These two posets with ground set $\FOUR$, define the bilattice depicted in Figure~\ref{fig:four-bilattice}. The axis can be interpreted as the orderings regarding truth and information, and the $\FOUR$ elements are the respective elements of the 4-valued logic \emph{false} ($\false$), \emph{true} ($\true$), \emph{unknown} ($\unknown$), and \emph{conflicting} ($\conflicting$). We use \emph{$\FOUR$ operators} to refer to the operators $\TruthMeet$, $\TruthJoin$, $\InfMeet$, and $\InfJoin$ (see Figure~\ref{fig:four-bilattice}), and \emph{$\FOUR$ structure} to refer to the algebraic structure that consists of the $\FOUR$ operators over the $\FOUR$ set. Given a $\FOUR$ operator $\circ$ and a set $A = \{\alpha_1,\dots,\alpha_n\} \subseteq \FOUR$, we write $\Id{\circ}$ to denote the identity element for an operator $\circ$ in the $\FOUR$ structure (i.e., $\Id{\TruthMeet} = \true$, $\Id{\TruthJoin} = \false$, $\Id{\InfMeet} = \conflicting$, and $\Id{\InfJoin} = \unknown$), we write $\Absorbing{\circ}$ to denote the absorbing element for a $\FOUR$ operator $\circ$, (i.e., $\Absorbing{\TruthMeet} = \false$, $\Absorbing{\TruthJoin} = \true$, $\Absorbing{\InfMeet} = \unknown$, and $\Absorbing{\InfJoin} = \conflicting$), and we write $\Reduce{\circ}{A}$ for the $\FOUR$ element $\Id{\circ} \circ \alpha_1 \circ \dots \circ \alpha_n$.

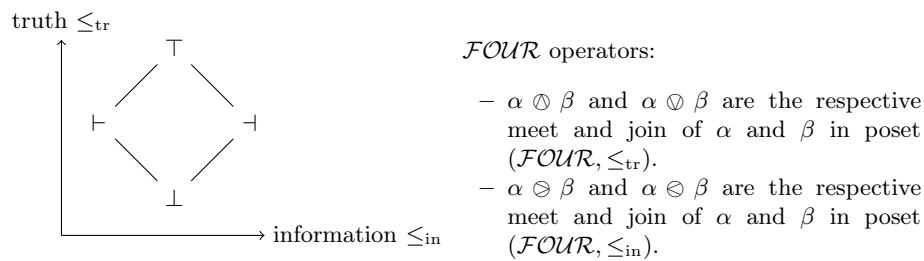
\begin{figure}[t]
  \centering
  \begin{tikzpicture}
    \node(true) at (0,1) {$\true$};
    \node(false) at (0,-1) {$\false$};
    \node(conf) at (1,0) {$\conflicting$};
    \node(unk) at (-1,0) {$\unknown$};
    \path[draw] (true)--(unk)--(false)--(conf)--(true);
    \path[draw,->] (-1.5,-1.5)--(1.2,-1.5) node [right] {information $\leqInf$};
    \path[draw,->] (-1.5,-1.5)--(-1.5,1.1) node [above] {truth $\leqTruth$};
  \end{tikzpicture}
  \hfill
  \begin{minipage}[b]{0.5\hsize}
    $\FOUR$ operators:
    \begin{itemize}
    \item
      $\alpha \TruthMeet \beta$ and $\alpha \TruthJoin \beta$ are the respective meet and join of $\alpha$ and $\beta$ in poset $(\FOUR, \leqTruth)$.
    \item
      $\alpha \InfMeet \beta$ and $\alpha \InfJoin \beta$ are the respective meet and join of $\alpha$ and $\beta$ in poset $(\FOUR, \leqInf)$.
    \end{itemize}
\end{minipage}
  \caption{Bilattice of the $\FOUR$-structure.}
  \label{fig:four-bilattice}
\end{figure}

A \emph{monoid} $\sM$ is an algebraic structure $(M, \SumM, \ZeroM)$ where $M$ is a non-empty set, $\SumM$ is a closed associative operation on set $M$, and $\ZeroM \in M$ is the identity element of operation $\SumM$ (i.e., $\ZeroM \SumM a = a$ and $a \SumM \ZeroM = a$, for every $a \in M$). The monoid $\sM$ is said to be \emph{commutative} if operation $\SumM$ is commutative.

A \emph{semiring} $\sK$ is an algebraic structure $(K, \SumK, \ProdK, \ZeroK, \OneK)$ where $(K, \SumK, \ZeroK)$ and $(K, \ProdK, \OneK)$ are monoids, and $\ProdK$ distributes over $\SumK$ (i.e., $a \ProdK (b \SumK c) = (a \ProdK b) \SumK (a \ProdK c)$ and $(b \SumK c) \ProdK a = (b \ProdK a) \SumK (c \ProdK a)$). The semiring $\sK$ is said to be \emph{commutative} if operations $\SumK$ and $\ProdK$ are commutative.
For example, the algebraic structures $(\FOUR, \TruthJoin, \TruthMeet, \false, \true)$, $(\FOUR, \InfJoin, \InfMeet, \unknown, \conflicting)$, $(\{\false, \true\}, \TruthJoin, \TruthMeet, \false, \true)$, and $(\mathbb{N}, +, \cdot, 0, 1)$ are commutative semirings.

We will write $\FOURTruth$ and $\FOURInf$ for the semirings $(\FOUR, \TruthJoin, \TruthMeet, \false, \true)$ and $(\FOUR, \InfJoin, \InfMeet, \unknown, \conflicting)$, respectively.

\section{$\sK$-annotated {\SPARQLStar}}%
\label{sec:annotated-sparqlstar}

In this section, we extend a fragment of the Geerts et al.~\cite{DBLP:journals/jacm/GeertsUKFC16} $\sK$-annotated {\SPARQL} algebra to support {\RDFStar} and {\SPARQLStar} (following the notions from the working draft~\cite{rdfstar}).
We consider a fragment without the non-monotonic operator \texttt{MINUS} because the non-monotonic operators go beyond the semiring structure of each of the two $\FOUR$ lattices.

We assume two countable pairwise disjoint sets $\SetIRIs$ and $\SetVars$, called the set of \emph{ IRIs} and the set of \emph{variables}.
We call the elements of $\SetIRIs^3$ \emph{RDF triples}\footnote{In RDF, triples can also have other element types, namely blank nodes and literals. For simplicity, we do not consider them, since they do not change our results.}.
Given an RDF triple $(s,p,o) \in \SetIRIs^3$, we say that $s$ is the \emph{subject}, $p$ is the \emph{predicate}, and $o$ is the \emph{object} of the triple.
According to the working draft~\cite{rdfstar}, \emph{{\SPARQLStar} triples} and \emph{{\SPARQLStar} triple patterns} are defined recursively as follows. Given two {\SPARQLStar} triple patterns $T_1$ and $T_2$, and an {\RDF} triple pattern $\triple{S}{P}{O}$, the triples $\triple{S}{P}{O}$, $\triple{T_1}{P}{O}$, $\triple{S}{P}{T_2}$, and $\triple{T_1}{P}{T_2}$ are {\SPARQLStar} triple patterns. A {\SPARQLStar} triple pattern without variables is an \emph{{\RDFStar} triple} (and $\mathcal{T}$ is the set of all {\RDFStar} triples), and an \emph{{\RDFStar} graph} is a set of {\RDFStar} triples.

A \emph{{\SPARQLStar} solution mapping} (or a \emph{mapping}) is a partial function $\mu : \SetVars \to \SetIRIs \cup \mathcal{T}$ with finite domain $\dom(\mu)$. Two mappings $\mu$ and $\mu'$ are \emph{compatible} if $\mu(\varX) = \mu'(\varX)$ for every variable $\varX \in \dom(\mu) \cap \dom(\mu')$. If mappings $\mu$ and $\mu'$ are compatible, $\mu \cup \mu'$ is the mapping with domain $\dom(\mu) \cup \dom(\mu')$ that is compatible with $\mu$ and $\mu'$. Given a set of variables $W \subset \SetVars$, we write $\Mappings{W}$ to denote the set of mappings $\{\mu \st \dom(\mu) = W\}$.

We next extend {\RDFStar} and {\SPARQLStar} to support annotations over a semiring $\sK$.
Given a commutative semiring $\sK = (K, \SumK, \ProdK, 0, 1)$, an \emph{{\RDFStar} $\sK$-graph} is a function $\AGraph : \mathcal{T} \to K$ that maps every {\RDFStar} triple to an element of the semiring $\sK$.

\begin{remark}
  Notice that $\sK$ denotes an arbitrary semiring, whereas $\FOURTruth$ and $\FOURInf$ are two concrete semirings. 
  Thus, the $\sK$-annotated {\SPARQL} algebra by Geerts et al.~\cite{DBLP:journals/jacm/GeertsUKFC16} is defined in abstract, 
  whereas the $\FOURTruth$-annotated and the $\FOURInf$-annotated {\SPARQLStar} algebras are concrete instances of the abstract algebra proposed by Geerts et al.
\end{remark}

\begin{definition}[Syntax of $\sK$-annotated {\SPARQLStar}]
  We next present a recursive definition for the syntax of \emph{$\sK$-annotated {\SPARQLStar}} queries $Q$ and a finite set of variables, called the \emph{in-scope variables}, denoted $\inScope(Q)$.
  \begin{enumerate}
  \item
    A {\SPARQLStar} triple pattern $T$ is a $\sK$-annotated {\SPARQLStar} query, whose in-scope variables are the variables occurring in triple pattern $T$.
  \item
    Given a $\sK$-annotated {\SPARQLStar} query $Q$ and a set of variables $W \subseteq \inScope(Q)$, the expression $\qSELECT{W}{Q}$ is a $\sK$-annotated {\SPARQLStar} query with in-scope variables~$W$.
  \item
    Given a $\sK$-annotated {\SPARQLStar} query $Q$ and a formula $\varphi$, the expression $\qFILTER{Q}{\varphi}$ is a $\sK$-annotated {\SPARQLStar} query with in-scope variables $\inScope(Q)$.
  \item
    Given two $\sK$-annotated {\SPARQLStar} queries $Q_1$ and $Q_2$, the expression $\qAND{Q_1}{Q_2}$ is a $\sK$-annotated {\SPARQLStar} query with in-scope variables $\inScope(Q_1) \cup \inScope(Q_2)$.
  \item
    Given two $\sK$-annotated {\SPARQLStar} queries $Q_1$ and $Q_2$ with the same set $W$ of in-scope variables, the expression $\qUNION{Q_1}{Q_2}$ is a $\sK$-annotated {\SPARQLStar} query with in-scope variables $W$.
  \end{enumerate}
\end{definition}

\begin{definition}[Semantics of $\sK$-annotated {\SPARQLStar}]
  Given a semiring $\sK = (K, \SumK, \ProdK, \ZeroK, \OneK)$, the result of evaluating a $\sK$-annotated {\SPARQLStar} query $Q$ over a $\sK$-graph $G$, is a function $\sparqlEval{Q}{\AGraph} : \Mappings{\inScope(Q)} \to K$, called \emph{$\sK$-relation}, defined recursively as follows:
  \begin{enumerate}
  \item
    Given a {\SPARQLStar} triple pattern $T$, if $\dom(\mu) = \inScope(T)$ then $\sparqlEval{Q}{\AGraph}(\mu)$ is $\AGraph(t)$, where $t$ is the {\RDFStar} triple resulting from replacing every variable $\varX$ occurring in $T$ with $\mu(\varX)$; otherwise, $\sparqlEval{Q}{\AGraph}(\mu) = \ZeroK$.
  \item
    For the recursive cases, the semantics is the following:
    \begin{enumerate}
    \item $\sparqlEval{\qSELECT{W}{Q}}{\AGraph}(\mu) =
      \sum_{\mu': \mu'|_W=\mu}\sparqlEval{Q}{\AGraph}(\mu')$,
    \item
      $\sparqlEval{\qFILTER{Q}{\varphi}}{\AGraph}(\mu) =
      \sparqlEval{Q}{\AGraph}(\mu) \times_{\sK} k_{\mu\models\varphi}$,
    \item
      $\sparqlEval{\qUNION{Q_1}{Q_2}}{\AGraph}(\mu) =
      \sparqlEval{Q_1}{\AGraph}(\mu) +_{\sK} \sparqlEval{Q_2}{\AGraph}(\mu)$,
    \item
      $\sparqlEval{\qAND{Q_1}{Q_2}}{\AGraph}(\mu) =
      \sparqlEval{Q_1}{\AGraph}(\mu|_{\inScope(Q_1)}) \times_{\sK}
      \sparqlEval{Q_2}{\AGraph}(\mu|_{\inScope(Q_2)})$,
    \end{enumerate}
    where $\sum$ denotes sums using $\SumK$, $\mu'|_{W}$ is the projection of mapping $\mu'$ on the variables in $W$, and $k_{\mu \models \varphi}$ is $\OneK$ if replacing the variables in $\varphi$ according to $\mu$ leads to a true formula, and $k_{\mu \models \varphi}$ is $\ZeroK$, otherwise.
  \end{enumerate}
\end{definition}

\begin{table}
  \caption{{\RDFStar} triples in the running example graph $G = \{t_1,\dots,t_8\}$. The {\RDFStar} triples $t_9$, $t_{10}$, and $t_{11}$ are not stated in $G$ but occur nested in the triples in $G$.}
  \label{table:running-example}
  \centering
  \begin{tabular}{p{0.05\hsize}p{0.922\hsize}}
    \toprule
    Id & Triple \\
    \midrule
    $t_1$ & $\ToBeTrue{\Pope}{\triple{\Jesus}{\type}{\FullDeity}}$, \\
    $t_2$ & $\ToBeFalse{\Arius}{\triple{\Jesus}{\type}{\FullDeity}}$, \\
    $t_3$ & $\ToBeConflicted{\Christianity}{\triple{\Jesus}{\type}{\FullDeity}}$, \\
    $t_4$ & $\ToBeUnknown{\Russell}{\triple{\Jesus}{\type}{\FullDeity}}$ \\
    $t_5$ & $\ToBeTrue{\Russell}{\ToBeTrue{\Pope}{\triple{\Jesus}{\type}{\FullDeity}}}$ \\
    $t_6$ & $\ToBeTrue{\Russell}{\ToBeFalse{\Pope}{\triple{\Zeus}{\type}{\FullDeity}}}$ \\
    $t_7$ & $\triple{\Pope}{\type}{\Christian}$ \\
    $t_8$ & $\triple{\Arius}{\type}{\Christian}$ \\
    \midrule
    $t_9$ & $\triple{\Jesus}{\type}{\FullDeity}$, \\
    $t_{10}$ & $\ToBeFalse{\Pope}{\triple{\Zeus}{\type}{\FullDeity}}$ \\
    $t_{11}$ & $\triple{\Zeus}{\type}{\FullDeity}$ \\
    \bottomrule
  \end{tabular}
\end{table}

\begin{example}[Running Example]%
  \label{ex:running-example}
  Let $G = \{t_1 \mapsto \true, \dots, t_8 \mapsto \true, * \mapsto \unknown \}$ be the {\RDFStar} $\FOURInf$-graph such that $G(t_i) = \true$ for every {\RDFStar} triple $t_i$, with $1 \leq i \leq 8$, listed in Table~\ref{table:running-example}, and $G(t') = \unknown$ for the rest of all possible {\RDFStar} triples $t'$.
  Then, $\sparqlEval{\ToBeFalse{\varX}{\triple{\varY}{\type}{\FullDeity}}}{G}$ returns the $\sK$-relation $R = \{ \{ \varX \mapsto \Arius, \varY \mapsto \Jesus \} \mapsto \true, * \mapsto \unknown \}$. Intuitively, this query returns all people who believe it to be false that somebody is a full deity.
\end{example}

\paragraph{\textbf{Support of functions.}}
So far, we have described the $\FOUR$ structure, and the extended Geerts et al.~\cite{DBLP:journals/jacm/GeertsUKFC16} $\sK$-annotated algebra.
In principle, the elements of this algebra, $\sK$-graphs and $\sK$-relations, can have no finite representation.
One of the concepts used to characterize $\sK$-graphs and $\sK$-relations with a finite representation is the support of a function, which we describe next.

Given two functions $f: A \to B$ and $g: A \to B$, and a binary operation $\diamond$ closed in $B$, $f \diamond g$ is the function such that $(f \diamond g)(x) = f(x) \diamond g(x)$.
If $\diamond$ has an identity element $\Id{\diamond}$, the \emph{support} of $f$ regarding operation $\diamond$ is the set $\support_{\diamond}(f) = \{ a \in A \st f(a) \neq \Id{\diamond} \}$. If the support of $f$ is a finite set $\{a_1, \dots, a_n\}$, we encode $f$ with the finite mapping $\{ a_1 \mapsto f(a_1), \dots, a_n \mapsto f(a_n), * \mapsto \Id{\diamond} \}$, where $*$ is a wildcard for the (possibly infinite) remaining mappings that are mapped to the identity of the operation.

\begin{remark}
  Intuitively, the support represents all values that are assigned to \emph{non-zero} values. In the case of a semiring $\sK = (K,\SumK, \ProdK, \ZeroK, \OneK)$ the zero value is $\ZeroK$, whereas in the $\FOUR$ structure we can consider two zero values, namely $\false$ and $\unknown$, depending on which semiring we are considering. These two values are the corresponding bottoms of the two lattices in the $\FOUR$ structure. Intuitively, the zero values in $\FOUR$ represent the default value for statements that are not included in a knowledge base, according to either the \emph{closed-world assumption} or the \emph{open-world assumption}.
\end{remark}

\begin{remark}
  According to the specification, an {\RDFStar} graph is a finite set of {\RDFStar} triples $\{T_1,\dots,T_n\}$ and under the open-world assumption (which is the standard for {\RDFStar}) a triple $T$ is true if $T \in G$ and unknown if $T \notin G$. Hence, the {\RDFStar} graph $G$ can be interpreted as the $\FOURInf$-graph that is represented by the finite mapping $\{T_1 \mapsto \true, \dots, T_n \mapsto \true, * \mapsto \unknown \}$. Furthermore, the evaluation of a $\sK$-annotated {\SPARQLStar} query over a $\sK$-graph with finite support always returns a $\sK$-relation with finite support. This finite property, guarantees that the algebra can be implemented.
\end{remark}

\begin{proposition}
  Given a semiring $\sK$, for every {\RDFStar} $\sK$-graph $G$ and every {\SPARQLStar} query $Q$, the $\sK$-relation $\sparqlEval{Q}{G}$ has finite support.
\end{proposition}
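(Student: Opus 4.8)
The plan is to prove the statement by structural induction on the query $Q$, showing that the support $\support_{\SumK}(\sparqlEval{Q}{G}) = \{\mu : \sparqlEval{Q}{G}(\mu) \neq \ZeroK\}$ is finite. I will work under the assumption, implicit in reading $G$ as the encoding of a finite {\RDFStar} graph, that $G$ itself has finite support; this assumption is in fact necessary, since a graph assigning $\OneK$ to every triple already forces a triple pattern with one variable to have infinite support. Throughout I will use that $\ZeroK$ is absorbing for $\ProdK$, i.e. $a \ProdK \ZeroK = \ZeroK \ProdK a = \ZeroK$ for all $a \in K$ — a property that holds in the concrete semirings $\FOURTruth$ and $\FOURInf$ because $\false$ and $\unknown$ are the bottoms of the respective lattices, and which I take for granted for the abstract $\sK$.

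For the base case, let $Q$ be a triple pattern $T$. The key observation is that matching against a single ground triple is deterministic: for each {\RDFStar} triple $t$ there is at most one mapping $\mu$ with $\dom(\mu) = \inScope(T)$ whose substitution into $T$ yields $t$, since the value of every variable is read off from the position it occupies in $t$ and repeated occurrences must agree. Because $\sparqlEval{T}{G}(\mu) = G(t)$ for the triple $t$ obtained from $\mu$ and $T$, a mapping $\mu$ lies in the support only if $G(t) \neq \ZeroK$, i.e. only if $t \in \support_{\SumK}(G)$. This exhibits an injection from $\support_{\SumK}(\sparqlEval{T}{G})$ into the finite set $\support_{\SumK}(G)$, settling the base case.

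For the inductive step I assume the claim for the subqueries and treat each operator. The cases $\qFILTER{Q}{\varphi}$ and $\qUNION{Q_1}{Q_2}$ are immediate: since $k_{\mu\models\varphi} \in \{\ZeroK, \OneK\}$ and $\ZeroK$ is absorbing, $\support_{\SumK}(\sparqlEval{\qFILTER{Q}{\varphi}}{G}) \subseteq \support_{\SumK}(\sparqlEval{Q}{G})$; and because $\ZeroK \SumK \ZeroK = \ZeroK$, the support of $\sparqlEval{\qUNION{Q_1}{Q_2}}{G}$ is contained in $\support_{\SumK}(\sparqlEval{Q_1}{G}) \cup \support_{\SumK}(\sparqlEval{Q_2}{G})$, a union of two finite sets. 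For $\qAND{Q_1}{Q_2}$, if the product is non-zero then both factors are non-zero by absorption, so $\mu \mapsto (\mu|_{\inScope(Q_1)}, \mu|_{\inScope(Q_2)})$ embeds the support injectively — $\mu$ is recovered as the union of its two restrictions — into the finite product $\support_{\SumK}(\sparqlEval{Q_1}{G}) \times \support_{\SumK}(\sparqlEval{Q_2}{G})$. For $\qSELECT{W}{Q}$, a mapping $\mu$ with $\dom(\mu) = W$ is in the support only if some $\mu'$ with $\mu'|_W = \mu$ contributes a non-zero summand, so the support lies in the finite image $\{\mu'|_W : \mu' \in \support_{\SumK}(\sparqlEval{Q}{G})\}$.

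The one point deserving genuine care, which I regard as the \emph{crux}, is that $\qSELECT{W}{Q}$ is defined by a sum $\sum_{\mu' : \mu'|_W = \mu}\sparqlEval{Q}{G}(\mu')$ ranging over an a priori infinite set of mappings (those agreeing with $\mu$ on $W$ but free on $\inScope(Q)\setminus W$, whose values range over the infinite set $\SetIRIs \cup \mathcal{T}$). The induction therefore does double duty: the inductive hypothesis that $\sparqlEval{Q}{G}$ has finite support guarantees that all but finitely many summands equal $\ZeroK$, so the sum is genuinely finite and well-defined, and only then does the support bound above even make sense. Making this well-definedness explicit — rather than the routine set-containment bookkeeping of the other cases — is where the real content of the argument sits.
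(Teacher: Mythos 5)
Your proof is correct and follows the same route the paper indicates — structural induction on the query — which the paper states in one line without supplying any details. You additionally make explicit two hypotheses the paper leaves implicit but genuinely needs (finite support of $G$, and $\ZeroK$ being absorbing for $\ProdK$, which is absent from the paper's semiring definition but holds in $\FOURTruth$ and $\FOURInf$), and you correctly isolate the well-definedness of the $\aSELECT$ sum as the only non-routine step.
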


\begin{proof}
  This can be shown by induction on the structure of the query.
\end{proof}

\begin{remark}
  The $\FOURInf$-annotated {\SPARQLStar} algebra allows us to query $\FOURInf$-graphs, but does not provide the means to easily formulate the epistemic queries described in Section~\ref{sec:UseCase}. This motivates the proposal of the query language we describe in the next section.
\end{remark}


\section{Epistemic SPARQL}%
\label{sec:eSPARQL}

This section presents the syntax and semantics of Epistemic SPARQL ({\ESPARQL}), the language designed for the use cases described in Section~\ref{sec:UseCase}.

So far, we have defined the notion of {\SPARQLStar} $\sK$-graphs, where $\sK$ is a semiring. However, a $\sK$-graph is an abstract notion, since no concrete semiring is provided. If concrete semirings like $\FOURTruth$ and $\FOURInf$ are considered, then we have concrete notions as $\FOURTruth$-graphs and $\FOURInf$-graphs (see Example~\ref{ex:running-example}). In what follows, a $\FOUR$-graph will be a function that associates {\RDFStar} triples to elements in set $\FOUR$, without choosing one of the two semirings ($\FOURTruth$ or $\FOURInf$) defined by the $\FOUR$ structure.

A key characteristic of $\FOUR$-graphs is that they can contain epistemic metadata encoded using four predicates: $\believesToBeTrue$, $\believesToBeFalse$, $\believesToBeUnknown$, and $\believesToBeConflicted$. Each of these predicates encodes a belief. For example, triple $t_1 = \ToBeTrue{\Pope}{t_9}$ from Table~\ref{table:running-example} encodes that $\Pope$ believes statement $t_9$ to be true.

A basic operation over a $\FOUR$-graph $G$ is thus extracting beliefs for each of these predicates.
We next present \emph{belief queries}, which are expressions to extract a $\FOUR$-graph $G'$ representing the belief of one or more people according to the information of an input $\FOUR$-graph $G$. An atomic belief query $\AtomicBeliefQuery{\Pope}{\true}{\unknown}$ represents all statements $\Pope$ believe to be $\true$, assuming that the rest are annotated with the state $\unknown$. Compound belief queries are generated by combining atomic statements with the $\FOUR$ operators.

\begin{definition}[Syntax of a Belief Query]
  A \emph{belief query} $E$ is recursively defined as follows:
  \begin{enumerate}
  \item
    Given an element $u \in \SetIRIs \cup \SetVars$, and two $\FOUR$ elements $\alpha$ and $\beta$, the triple $\AtomicBeliefQuery{u}{\alpha}{\beta}$ is an \emph{atomic} belief query.
  \item
    Given two belief queries $E_1$ and $E_2$, and a $\FOUR$-operator $\circ$, the expression $(E_1 \circ E_2)$ is a \emph{compound} belief query.
  \end{enumerate}
  We write $\var(E)$ to denote the set of variables occurring in a belief query $E$.
  A belief query with no variables is said to be \emph{ground}.
  We write $(u, \circ)$ to denote the belief query $[u, \true, \Id{\circ}] \circ [u, \false, \Id{\circ}] \circ [u, \unknown, \Id{\circ}] \circ [u, \conflicting, \Id{\circ}]$.
\end{definition}

\begin{definition}[Semantics of a Ground Belief Query]
  Let $\circ$ be a $\FOUR$-operator. The semantics of a belief query $E$ over a $\FOUR$-graph $G$ is given by the $\FOUR$-graph, denoted $\GraphContext{G}{E}$, defined recursively as follows:
  \begin{enumerate}
  \item
    Given an element $a \in \SetIRIs$ and a $\FOUR$ operation $\circ$,
    \begin{align*}
      \GraphContext{G}{\AtomicBeliefQuery{a}{\true}{\beta}}(t)
      &= \left\{
        \begin{array}{ll}
          \true & \text{ if } G(\ToBeTrue{a}{t}) \in \{\true, \conflicting\},\\
          \beta & \text{ otherwise}.
        \end{array}
        \right.\\
      \GraphContext{G}{\AtomicBeliefQuery{a}{\false}{\beta}}(t)
      &= \left\{
        \begin{array}{ll}
          \false & \text{ if } G(\ToBeFalse{a}{t}) \in \{\true, \conflicting\},\\
          \beta & \text{ otherwise}.
        \end{array}
        \right.\\
      \GraphContext{G}{\AtomicBeliefQuery{a}{\unknown}{\beta}}(t)
      &= \left\{
        \begin{array}{ll}
          \unknown & \text{ if } G(\ToBeUnknown{a}{t}) \in \{\true, \conflicting\},\\
          \beta & \text{ otherwise}.
        \end{array}
        \right.\\
      \GraphContext{G}{\AtomicBeliefQuery{a}{\conflicting}{\beta}}(t)
      &= \left\{
        \begin{array}{ll}
          \conflicting & \text{ if } G(\ToBeConflicted{a}{t}) \in \{\true, \conflicting\},\\
          \beta & \text{ otherwise}.
        \end{array}
        \right.
    \end{align*}
  \item
    $\GraphContext{G}{(E_1 \circ E_2)}(t) =
    \GraphContext{G}{E_1}(t) \circ \GraphContext{G}{E_2}(t)$.
  \end{enumerate}
\end{definition}

The reader may wonder why we find believes on statements that are annotated as $\true$ or $\conflicting$ instead of only $\true$. The reason is that, regarding the information dimension, a statement annotated with $\conflicting$ is equivalent to a statement annotated twice, as $\true$ and $\false$. Hence, the $\conflicting$ annotated statement includes the $\true$ annotated statement.

\begin{example}
  Let $G$ be the $\FOUR$-graph described in Example~\ref{ex:running-example}. Then,
  \begin{enumerate}
  \item
    $\GraphContext{G}{\AtomicBeliefQuery{\Pope}{\true}{\unknown}} = \{t_9 \mapsto \true, * \mapsto \unknown\}$. Intuitively, the resulting graph has all the statements that are true according to $\Pope$ and all unstated statements to be unknown.
  \item
    $\GraphContext{G}{(\Pope,\InfJoin)} = \{t_9 \mapsto \true, * \mapsto \unknown\}$,
    $\GraphContext{G}{(\Arius,\InfJoin)} = \{t_9 \mapsto \false, * \mapsto \unknown\}$, and
    $\GraphContext{G}{(\Russell,\InfJoin)} = \{t_1 \mapsto \true, t_{10} \mapsto \false, * \mapsto \unknown\}$. Intuitively, these are the respective beliefs of $\Pope$, $\Arius$, and $\Russell$.
  \item
    $\GraphContext{G}{((\Pope,\InfJoin) \InfJoin (\Arius,\InfJoin))} = \{t_9 \mapsto \conflicting, * \mapsto \unknown\}$. Intuitively, $\Pope$ and $\Arius$ are conflicted regarding the nature of Jesus deity.
  \item 
    $\GraphContext{G}{((\Pope,\InfJoin) \InfJoin (\Russell,\InfJoin))} = \{t_1 \mapsto \true, t_9 \mapsto \true, t_{10} \mapsto \true, * \mapsto \unknown\}$. Intuitively, $\Pope$ and $\Russell$ are not conflicted regarding the nature of Jesus deity, so the most informative beliefs is assigned to triple $t_9$.
  \end{enumerate}
\end{example}

\begin{definition}[Syntax of {\ESPARQL} Filter Formulas]
  \emph{{\ESPARQL} filter formulas} are defined recursively as follows:
  \begin{enumerate}
  \item
    Given two variables $\varX, \varY \in \SetVars$ and two IRIs $a, b \in \SetIRIs$, the expressions $\varX = \varY$, $\varX = a$, $a = b$, and $\qBound{\varX}$ are atomic {\ESPARQL} filter formulas.
  \item
    Given a $\FOUR$ value $\alpha$, $\qState{\alpha}$ is an atomic {\ESPARQL} filter formula.
  \item
    Given two {\ESPARQL} filter formulas $\varphi$ and $\psi$, the expressions $\neg\varphi$, $(\varphi \land \psi)$ and $(\varphi \lor \psi)$ are compound {\ESPARQL} filter formulas.
  \end{enumerate}
\end{definition}

\begin{definition}[Semantics of {\ESPARQL} Filter Formulas]
  Given a $\FOUR$-relation $R$, a mapping $\mu$ in the domain of $R$, and a {\ESPARQL} filter formula $\varphi$, the truth value of $\varphi$ on $\mu$ is defined recursively as follows:
  \begin{enumerate}
  \item
    If $\varphi$ has the form $u = v$ then $\mu(\varphi) = \Error$ if there is a variable $\varX$ in $\varphi$ such that $\varX \notin \dom(\mu)$, $\mu(\varphi) = \True$ if replacing every variable $\varX$ in $\varphi$ with $\mu(\varphi)$ leads to a true entity, and $\dom(\mu)$, otherwise, $\mu(\varphi) = \False$.
  \item
    If $\varphi$ has the form $\qBound{\varX}$ and $\varX \in \dom(\mu)$, $\mu(\varphi) = \True$; otherwise $\mu(\varphi) = \False$.
  \item
    If $\varphi$ has the form $\qState{\alpha}$, $\mu(\varphi) = \True$ if $R(\mu) = \alpha$; otherwise, $\mu(\varphi) = \False$.
  \item
    If $\varphi$ is a compound {\ESPARQL} filter formula then $\varphi$ is evaluated with the standard SPARQL three-valued logic semantics of the logical connectives $\neg$, $\land$, and $\lor$ for the values $\True$, $\False$, and $\Error$ (see~\cite{DBLP:journals/tods/PerezAG09}).
  \end{enumerate}
  We define the relation $\mu \models_R \varphi$ to be true if $\mu(\varphi) = \True$ for $R$.
\end{definition}

\noindent
Intuitively, {\ESPARQL} filter formulas extend {\SPARQL} filter formulas with the ability to check the state of the current mapping.

\begin{definition}[Syntax of {\ESPARQL}]
  The \emph{syntax of {\ESPARQL} queries} and their \emph{in-scope} variables is defined recursively as follows:
  \begin{enumerate}
  \item
    An RDF-star triple pattern $T$ is an {\ESPARQL} query whose in-scope variables are the variables occurring in $T$.
  \item
    Given two {\ESPARQL} queries $Q_1$ and $Q_2$ and a $\FOUR$ operation $\circ \in \{ \InfMeet, \TruthMeet\}$, the expression $(Q_1 \circ Q_2)$ is an {\ESPARQL} query whose in-scope variables are\break $\inScope(Q_1) \cup \inScope(Q_2)$.
  \item
    Given two {\ESPARQL} queries $Q_1$ and $Q_2$ with the same set $W$ of in-scope variables and a $\FOUR$ operation $\circ \in \{\InfJoin, \TruthJoin\}$, the expression $(Q_1 \circ Q_2)$ is an {\ESPARQL} query whose in-scope variables are $W$.
  \item
    Given a {\ESPARQL} query $Q$, a $\FOUR$ operation $\circ$, and a filter formula $\varphi$, the expression $\qFILTERe{\circ}{Q}{\varphi}$ is an {\ESPARQL} query with in-scope variables $\inScope(Q)$.
  \item
    Given an {\ESPARQL} query $Q$, a $\FOUR$ operator $\circ$, and a finite set of variables $W \subset \inScope(Q)$, the expression $\qSELECTe{\circ}{W}{Q}$ is an {\ESPARQL} query with in-scope variables $W$.
  \item
    Given an {\ESPARQL} query $Q$, a filter formula $\varphi$, and two $\FOUR$ elements $\alpha$ and $\beta$, the expression $\qSTATEUPDATE{Q}{\varphi}{\alpha}{\beta}$ is an {\ESPARQL} query with in-scope variables $\inScope(Q)$.
  \item
    Given a belief query $E$, and an {\ESPARQL} query $Q$ such that the in-scope variables of $Q$ do not appear in $E$, the expression $\qBELIEFe{E}{Q}$ is an {\ESPARQL} query with in-scope variables $\var(E) \cup \inScope(Q)$.
  \end{enumerate}
\end{definition}

\begin{definition}[Semantics of {\ESPARQL}]\label{def:esparql-semantics}
  The result of evaluating an \emph{{\ESPARQL} query} query $Q$ on a $\FOUR$-graph $G$ is a function $\kset{Q}{G} : \Omega_{\inScope(Q)} \to \FOUR$, called $\FOUR$-relation, defined recursively as follows:
  \begin{enumerate}
  \item
    Given an RDF-star triple pattern $T$ and a mapping $\mu \in \Omega_{\inScope(Q)}$, $\kset{T}{G}(\mu)$ is $G(t)$, where $t$ is the {\RDFStar} triple pattern resulting from substituting every variable $\varX \in \inScope(Q)$ with $\mu(\varX)$.
  \item
    $\kset{\qANDTruth{Q_1}{Q_2}}{G}(\mu) =
      \kset{Q_1}{G}(\mu|_{\inScope(Q_1)}) \TruthMeet \kset{Q_2}{G}(\mu|_{\inScope(Q_1)})$.
  \item
    $\kset{\qANDInf{Q_1}{Q_2}}{G}(\mu) =
      \kset{Q_1}{G}(\mu|_{\inScope(Q_1)}) \InfMeet \kset{Q_2}{G}(\mu|_{\inScope(Q_1)})$.
  \item
    $\kset{\qUNIONTruth{Q_1}{Q_2}}{G}(\mu) = \kset{Q_1}{G}(\mu) \TruthJoin \kset{Q_2}{G}(\mu)$.
  \item
    $\kset{\qUNIONInf{Q_1}{Q_2}}{G}(\mu) = \kset{Q_1}{G}(\mu) \InfJoin \kset{Q_2}{G}(\mu)$.
  \item
    $\kset{\qFILTERe{\circ}{Q_1}{\varphi}}{G}(\mu) = \kset{Q_1}{G}(\mu) \circ \alpha_{\mu \models \varphi}$, where $\alpha_{\mu \models \varphi} = \Id{\circ}$ if formula $\mu \models_R \varphi$ in the $\FOUR$-relation $R = \kset{Q_1}{G}$, and $\alpha_{\mu \models \varphi} = \Absorbing{\circ}$, otherwise.
  \item
    $\kset{\qSELECTe{\circ}{W}{Q_1}}{G}(\mu) = \Reduce{\circ}{%
      \{ \kset{Q_1}{G}(\mu_1) \st \mu_1|_W = \mu \}
    }$.
  \item
    $\kset{\qSTATEUPDATE{Q_1}{\varphi}{\alpha}{\beta}}{G}(\mu) = \gamma$, where $\gamma = \alpha$ if $\mu \models_{\kset{Q_1}{G}} \varphi$, and $\gamma = \beta$, otherwise.
  \item
    $\kset{\qBELIEFe{E}{Q}}{G}(\mu) = \kset{Q}{\GraphContext{G}{E'}}(\mu|_{\inScope(Q)})$, where $E'$ is the belief query resulting from replacing every variable $\varX$ in $E$ with $\mu(\varX)$.
  \end{enumerate}
\end{definition}

\noindent
Intuitively, each of the {\SPARQL} operators corresponds to two {\ESPARQL} operators (e.g., $\aAND$ corresponds to $\aANDTruth$ and $\aANDInf$, and $\aUNION$ corresponds to $\aUNIONTruth$ and $\aUNIONInf$), and each of these operators are evaluated according to the Geerts et al.~\cite{DBLP:journals/jacm/GeertsUKFC16} $\sK$-annotated SPARQL algebra by choosing $\sK$ to be one of the two $\FOUR$ semirings, namely $\FOURTruth$ and $\FOURInf$. Except for the operator $\aBELIEFe$, which changes the context graph where the query is evaluated.

\begin{example}\label{ex:esparql-semantics}
    Let $G =\{t_1, \dots, t_8\}$ be the annotated graph mentioned in Table~\ref{table:running-example}. Consider the query 
    \(Q = \qBELIEFe{(\varX, \InfJoin)}{\triple{\varY}{\type}{\FullDeity}}\).
    According to Definition~\ref{def:esparql-semantics}, the result of query $Q$ is a $\FOUR$-relation with mappings $\mu$ whose domain includes the variables $\varX$ and $\varY$. Given such a mapping $\mu$, 
    $\kset{Q}{G}(\mu) = \kset{\triple{\varY}{\type}{\FullDeity}}{\GraphContext{G}{(\mu(\varX), \InfJoin)}}(\mu|_{\{\varY\}})$.
    For example, if $\mu(\varX) = \Pope$ and $\mu(\varY) = \Jesus$, then
    \[
      \kset{Q}{G}(\mu) \begin{aligned}[t]
        &= \kset{\triple{\varY}{\type}{\FullDeity}}{\GraphContext{G}{(\Pope, \InfJoin)}}(\mu|_{\{\varY\}}) \\
        &= \GraphContext{G}{(\Pope, \InfJoin)}(\triple{\Jesus}{\type}{\FullDeity}) \\
        &= \true.
      \end{aligned}
    \]
    It is not difficult to see that if variable $\varX$ had been bound to an individual whose beliefs are not encoded in the graph, then $\kset{Q}{G}(\mu) = \bot$ because $\GraphContext{G}{(\Jesus, \InfJoin)}$ would have hold no information (i.e., it would have annotated all triples with $\bot$).
\end{example}


\section{Use Case Requirements Discussion}%
\label{sec:UseCaseDiscussion}

In this section, we show a {\ESPARQL} query for each of the use cases U1--U4. Since the notation of {\ESPARQL} queries in an algebraic format is not suitable for end-users, we additionally present how this query can be written as an extension of the user {\SPARQL} syntax.

\subsection*{Use Case U1 Query}
\[
    \qBELIEFe{(\Pope, \InfJoin)}{\triple{\varDeity}{\type}{\FullDeity}}
\]
The clause $\qBELIEFe{(\Pope, \InfJoin)}{\cdot}$ generates the graph $G'$ consisting of all beliefs of $\Pope$. The states of duplicate statements are aggregated with operation $\InfJoin$, and statements that are not mentioned are defined to have state $\unknown$.
Then, the nested query $\triple{\varDeity}{\type}{\FullDeity}$ returns a $\FOUR$-relation $R$ whose domain consists of all mappings $\Omega|_{\variable{deity}}$. This domain includes all mappings of the form $\mu_u = \{\variable{deity} \mapsto u\}$ where $u \in \SetIRIs$. For each $u \in \SetIRIs$, $R(\mu) = G'(\triple{u}{\type}{\FullDeity})$. Since the only belief of $\Pope$ is indicated in triple $t_1$, we conclude that $R(\mu)$ is $\true$ if $u$ is $\Jesus$, and $R(\mu)$ is $\unknown$, otherwise (see Example~\ref{def:esparql-semantics}).

Observe that the query includes the operator $\InfJoin$. This means that the operations are done over the information lattice. In the user {\ESPARQL} syntax, this is indicated in a simple way by introducing the modifier \texttt{INFO} to the \texttt{SELECT} clause. The next listing shows how the graph $G'$ is specified using the \texttt{FROM BELIEF} clause.

\begin{lstlisting}[language=SPARQL]
SELECT INFO ?deity
FROM BELIEF <PopeDI>
WHERE { ?deity a <FullDeity> }
\end{lstlisting}

\subsection*{Use Case U2 Query}
\[
  (\aSELECTe{\InfJoin}~{\varDeity}\;
    \begin{aligned}[t]
      (& (\triple{\varX}{\type}{\Christian} \aSTATEUPDATE \qState{\true}~\conflicting~\unknown) \;
      \aANDInf \\
      & \qBELIEFe{(\varX, \InfJoin)}{\triple{\varDeity}{\type}{\FullDeity}}))
    \end{aligned}
\]
The internal clause $Q_2 = \qBELIEFe{(\varX, \InfJoin)}{\cdot}$ works similarly to the previous example, but the ground belief queries are constructed over variable $\varX$. Thus, we need to bind this variable to entities whose belief are encoded in the graph (see Example~\ref{def:esparql-semantics}). There are then four possible bindings for variable $\varX$, namely $\Pope$, $\Arius$, $\Christianity$, and $\Russell$. The answers $\mu_2$ of the query $Q_2$ will be joined with the answers $\mu_1$ of the nested query $Q_1 = (\triple{\varX}{\type}{\Christian} \aSTATEUPDATE \qState{\true}~\conflicting~\unknown)$. This imposes a further restriction of the binding for variable $\varX$ (remember that $\InfMeet$ acts as \texttt{AND}). Only for $\Pope$ and $\Arius$, the state $\kset{Q_1 \InfJoin Q_2}{G}(\mu_1 \cup \mu_2)$ is not $\bot$. Thus, the $\qSELECTe{\InfJoin}{\varDeity}{\cdot}$ aggregates the beliefs of these two instances of variable $\varX$ with operation $\InfJoin$. If $\varY$ is bound to $\Jesus$, then for $\Pope$ the mapping $\mu_2$ is annotated as $\true$, whereas for $\Arius$ it is annotated as $\false$. Hence, the belief of Christians regarding the nature of $\Jesus$ is $\conflicting$.

In user {\ESPARQL} syntax, this query can be expressed using \texttt{FROM BELIEF} with a variable from within a nested \texttt{SELECT INFO} query.

\begin{lstlisting}[language=SPARQL]
SELECT INFO ?deity
WHERE {
  ?x a <Christian> .
  MAP IF (STATE IS TRUE) TO CONFLICTED ELSE UNKNOWN .
  {
    SELECT INFO ?deity
    FROM BELIEF ?x
    WHERE { ?deity a <FullDeity> }
  }
}
\end{lstlisting}

\subsection*{Use Case U3 Query}
\[
  (\qBELIEFe{%
    ((\Pope,\InfJoin) \InfJoin (\varX,\InfJoin))}{%
    \triple{\variable{s}}{\variable{p}}{\variable{o}}}
  \aSTATEUPDATE \qState{\conflicting}~\true~\false)
\]
We first consider the believe query $((\Pope,\InfJoin) \InfJoin (\varX,\InfJoin))$, which joins the beliefs of $\Pope$ with all other $\varX$ using $\InfJoin$. 
The $\FOUR$-relation constructed by $\triple{\variable{s}}{\variable{p}}{\variable{o}}$ is unconstrained; thus, we obtain the join of the information lattice over all triples.
Before projecting to $\varX$ via $\aSELECTe{\TruthJoin}$ we use $\aSTATEUPDATE$ to mark all mappings that include any conflict, which are aggregated by the $\aSELECTe{\TruthJoin}$.
Finally, we map $\conflicting$ to $\true$ and everything else to $\false$.

In the user {\ESPARQL} syntax, we use \texttt{SELECT} without any modifiers, which defaults to the truth lattice.

\begin{lstlisting}[language=SPARQL]
SELECT ?x
WHERE {
  {   
    SELECT INFO ?x
    FROM BELIEF <PopeDI> ?x
    WHERE { ?s ?p ?o }    
  }
  MAP IF (STATE IS CONFLICTED) TO TRUE ELSE FALSE
}
\end{lstlisting}

\subsection*{Use Case U4 Query}
\[
  \qSELECTe{\InfJoin}{\varX}{(\qBELIEFe{(\varY, \InfJoin)}{\qBELIEFe{(\varX, \InfJoin)}{\triple{\Zeus}{\type}{\FullDeity}}}}
\]
The nested query $\qBELIEFe{(\varY,\TruthJoin)}{\cdot}$ generates the $\FOUR$-annotated graph of all beliefs by $\varY$ aggregating them on the information lattice. Similarly, the nested query $\qBELIEFe{(\varX,\TruthJoin)}{\cdot}$ generates the $\FOUR$-annotated graph of all beliefs by $\varX$ according the beliefs of $\varY$. Then, the innermost clause $\triple{\Zeus}{\type}{\FullDeity}$ obtains a $\FOUR$-relation $R$ with mappings $\mu$ whose in-scope variables are $\varY$ and $\varX$. For example, $R(\{\varY \mapsto \Russell, \varX \mapsto \Pope\}) = \true$ because according to the beliefs of $\Russell$, $\Pope$ beliefs that $\triple{\Zeus}{\type}{\FullDeity}$. On the other hand, $R(\{\varY \mapsto \Arius, \varX \mapsto \Pope\}) = \unknown$ because the graph does not contain information about the beliefs of $\Arius$ regarding the beliefs of $\Pope$.
Finally, the clause $\qSELECTe{\InfJoin}{\varX}{\cdot}$ aggregates the mappings on such a relation $R$ for each instance of the variable $\varX$.

In the user {\ESPARQL} syntax, we use a nested $\aSELECT$ query for each level of nesting in the beliefs.

\begin{lstlisting}[language=SPARQL]
SELECT INFO ?x
WHERE {
  {
    SELECT INFO *
    FROM BELIEF ?y
    WHERE {
      SELECT INFO *
      FROM BELIEF ?x
      WHERE { <Zeus> a <FullDeity> }
    }
  }
}
\end{lstlisting}


\section{Finitely Supported eSPARQL}%
\label{sec:Safe-eSPARQL}

To implement {\ESPARQL}, the query results must be finitely encoded. As we have already shown, we can encode an infinite function with a finite mapping that has a finite support by encoding the non-zero states only.

\begin{definition}
  An {\ESPARQL} query $Q$ is said to be \emph{finitely supported} if, for every $\FOUR$-graph with finite support, there exists an element $\alpha \in \FOUR$, called a \emph{zero} for $Q$ and $G$, such that the set $\{ \mu \mid \kset{Q}{G}(\mu) \neq \alpha \}$ is finite.
\end{definition}

To know if {\ESPARQL} queries are finitely supported, consider two $\FOUR$-relations $R_1$ and $R_2$ with finite support that result of evaluating the respective queries $Q_1$ and $Q_2$. That is, there are two $\FOUR$ operations $\circ$ and $\diamond$, such that the sets $\support_\circ(R_1)$ and $\support_\diamond(R_2)$ are finite. Given a $\FOUR$ operation $\bullet$, does $R_3 = R_1 \bullet R_2$ have a finite support? To answer this question, lets start figuring what is needed for $R_3$ to have a finite support for each operator combination of the operators $\circ$, $\diamond$, and $\bullet$.

The first observation is that $R_1$ and $R_2$ have infinitely many mappings whose states are respectively $\Id{\circ}$ and $\Id{\diamond}$. If $\bullet$ is a join operator (i.e., $\TruthJoin$ or $\InfJoin$), then we will have infinitely many mappings $\mu$ such that $R_3(\mu) = \Id{\circ} \bullet \Id{\diamond}$. Since there is exactly one $\FOUR$-operator, namely $*$ such that $\Id{*} = \Id{\circ} \bullet \Id{\diamond}$, we want to know if $\support_*(R_3)$ is finite. In this case, the answer is straightforward. If we have a mapping $\mu'$ such that $R_3(\mu) \neq \Id{*}$ then it must happen that $R_1(\mu) \neq \Id{\circ}$ or $R_2(\mu) \neq \Id{\diamond}$. Since $R_1$ and $R_2$ have finite support, there are finitely many mappings $\mu$ satisfying that condition. Thus, $\support_*(R_3)$ is finite. Hence, for the two operations corresponding to the two lattice join (i.e., the generalizations of the {\SPARQL} \texttt{UNION}), two $\FOUR$-relations with finite support result in a $\FOUR$-relation with finite support.

We next show that if $\bullet$ is a meet operator (i.e., $\TruthMeet$ or $\InfMeet$), then the resulting $\FOUR$-relation can have no finite support. To this end, consider the query $Q_1 \InfMeet Q_2$ where $Q_1$ and $Q_2$ are two queries with respective in-scope variables $\varX$ and $\varY$. Given a $\FOUR$-annotated graph $G$, let $R_1$ be $\kset{Q_1}{G}$ and $R_2$ be $\kset{Q_2}{G}$, $\support_\TruthMeet(R_2)$ be finite, $R_1(\{\varX \mapsto a\}) = \true$, $R_1(\{\varX \mapsto b\}) = \false$, and assume that there are infinitely many values $c$ such that $R_2(\{\varY \mapsto c\}) = \true$. Then,
\[
  \begin{aligned}[t]
    &\kset{Q_1 \InfMeet Q_2}{G}(\{\varX \mapsto a, \varY \mapsto c\}) = \true,\\[-3pt]
    &\kset{Q_1 \InfMeet Q_2}{G}(\{\varX \mapsto b, \varY \mapsto c\}) = \unknown.  
  \end{aligned}
\]
Since we can take infinitely many values of $c$ to produce these two different states, we conclude that the answer of query $Q_1 \InfMeet Q_2$ has no finite support.

\begin{proposition}
  Every fragment of {\ESPARQL} that includes the operations $\TruthMeet$ or $\InfMeet$ can include queries that are not finitely supported.
\end{proposition}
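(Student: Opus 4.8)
The plan is to disprove the defining condition of finite support directly: for each meet operator I would exhibit one witness query $Q$ together with one finite-support $\FOUR$-graph $G$ such that $\kset{Q}{G}$ realizes two \emph{distinct} $\FOUR$ values, each on infinitely many mappings. The whole argument then rests on a one-line observation: if $\kset{Q}{G}(\mu) = \gamma_1$ for infinitely many $\mu$ and $\kset{Q}{G}(\mu) = \gamma_2$ for infinitely many $\mu$, with $\gamma_1 \neq \gamma_2$, then for \emph{every} candidate zero $\alpha \in \FOUR$ the set $\{\mu \mid \kset{Q}{G}(\mu) \neq \alpha\}$ is infinite, because $\alpha$ can agree with at most one of $\gamma_1, \gamma_2$ and so misses an entire infinite fibre. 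Since the running-example graph has finite support, producing such a $Q$ over it already witnesses the failure of the universally quantified condition, i.e.\ that no zero exists for $Q$ and $G$.

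For the $\InfMeet$ case I would instantiate the generic construction given just before the statement. Take $Q_1 = \qBELIEFe{(\varX, \InfJoin)}{\triple{\Jesus}{\type}{\FullDeity}}$, which by the earlier computations satisfies $\kset{Q_1}{G}(\{\varX \mapsto \Pope\}) = \true$ and $\kset{Q_1}{G}(\{\varX \mapsto \Arius\}) = \false$, so it separates two bindings of $\varX$. For $Q_2$ I need a relation on a disjoint variable $\varY$ that is non-default on infinitely many mappings; the device is a belief query whose default parameter is itself non-trivial, e.g.\ $Q_2 = \qBELIEFe{\AtomicBeliefQuery{\Pope}{\true}{\true}}{\triple{\varY}{\type}{\FullDeity}}$. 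Because $\GraphContext{G}{\AtomicBeliefQuery{\Pope}{\true}{\true}}$ is the constant-$\true$ graph, $\kset{Q_2}{G}(\{\varY \mapsto c\}) = \true$ for every $c \in \SetIRIs$. Then, taking $Q = \qANDInf{Q_1}{Q_2}$, the product of the two relations over $\{\varX,\varY\}$ gives $\true \InfMeet \true = \true$ on every mapping $\{\varX \mapsto \Pope, \varY \mapsto c\}$ and $\false \InfMeet \true = \unknown$ on every mapping $\{\varX \mapsto \Arius, \varY \mapsto c\}$, supplying the two infinitely-occurring values $\true$ and $\unknown$ required by the observation.

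The $\TruthMeet$ case reuses the same $Q_1, Q_2$ with $Q = \qANDTruth{Q_1}{Q_2}$: reading the truth lattice gives $\true \TruthMeet \true = \true$ and $\false \TruthMeet \true = \false$, again two distinct values each on infinitely many mappings. I expect the main obstacle to be the construction of $Q_2$ rather than the bookkeeping, since an ordinary triple pattern evaluated over a finite-support graph is non-default on only finitely many mappings; the argument genuinely hinges on using the belief query's default parameter (or, alternatively, a $\aSTATEUPDATE$ that rewrites the default state to a non-zero one) to manufacture a subrelation that is non-default everywhere. Once such a $Q_2$ is in hand, the meet is precisely the operator that spreads $Q_1$'s single two-valued distinction across the infinite support of $Q_2$, and checking that two values then occur infinitely often reduces to evaluating the four relevant meets in the two lattices.
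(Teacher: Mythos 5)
Your proposal is correct and follows essentially the same route as the paper: meet a relation that separates two bindings ($\true$ versus $\false$) with a relation that is $\true$ on infinitely many bindings, so that two distinct values each occur on infinitely many mappings and hence no single zero $\alpha$ can exist. You go slightly further than the paper by exhibiting concrete witness queries over the running-example graph (in particular, exploiting the belief query's default parameter to manufacture a relation that is non-default everywhere), whereas the paper merely postulates relations $R_1$ and $R_2$ with the required properties.
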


This negative result is not necessarily an impediment for implementing {\ESPARQL}. In general, since we are interested in statements about the individuals who appear in a knowledge graph, it suffices to consider mappings that range to the active domain of the graph (i.e., individuals that occur in triples in the support of the graph). Since this subset of $\SetIRIs$ is finite, using the active domain will lead to a query language whose results can be finitely encoded.


\section{Related Work}
\label{sec:RelatedWork}


There are many works on four-valued logics~\cite{DBLP:conf/ideas/GrahneM18,DBLP:journals/jphil/Restall95,DBLP:journals/ai/Patel-Schneider89a}, but no one of them considers them for the semantics and query evaluation in SPARQL. Other works annotate SPARQL answers with lattice elements~\cite{DBLP:journals/pvldb/HernandezGH21,DBLP:conf/www/AsmaHGFFH24}, but they do not provide a mean to operate with sets of statements encoding beliefs. Arnout et al.~\cite{DBLP:journals/pvldb/ArnaoutRWP21} consider knowledge graphs with negative facts. However, they do not consider conflicted statements as we did. Works on distributed knowledge contexts are all assuming that knowledge was represented using different ontologies but with the same epistemological status \cite{DBLP:conf/semweb/BouquetGHSS03,DBLP:journals/ai/GhidiniS17,DBLP:journals/ws/GrauPS06}. Some of them used SPARQL queries as mappings \cite{DBLP:conf/www/SchenkS08}, but did not address epistemological status as a key concern. Schenk et al.~\cite{DBLP:conf/semweb/Schenk08} studies the semantics of trust and caching in the Semantic Web considering the $\FOUR$ structure, however did not consider the problem of querying.


\section{Conclusions and Future Work}
\label{sec:Conclusion}

We presented {\ESPARQL}, a novel approach that allows for the description of epistemic information using {\RDFStar} and formulating epistemic queries using a query language which extends {\SPARQLStar}. This query language is based on the concrete $\FOUR$ bilattice, but we expect to generalize it to include more general bilattices.

Future work could include the study of different ways to implement {\ESPARQL}. The most direct way is to build on top of a standard {\SPARQLStar} engine. Indeed, the functionality of \texttt{FROM BELIEF} clauses to generate a new graph representing people's beliefs can be implemented with {\SPARQLStar} CONSTRUCT queries, which use aggregate operations to compute states of duplicated statements on a set of beliefs. Then, {\SPARQLStar} SELECT queries can be executed on top of the results from these CONSTRUCT queries. Other implementations include: (i) the whole rewriting of a {\ESPARQL} query as a single {\SPARQLStar} SELECT query, and (ii) using specialized indexes and algorithms to implement this query language.


\subsection*{Implementation.}
We implemented eSPARQL on top of Apache Jena~\cite{apache-jena}, an open source SPARQL-star engine. Our implementation~\cite{darus-4344_2024} is available under a free software license.

\subsection*{Acknowledgements}

This work was funded by the Deutsche Forschungsgemeinschaft (DFG, German Research Foundation) under the DFG Germany's Excellence Strategy -- EXC 2120/1 -- 390831618, and the DFG Excellence Strategy -- EXC 2075 -- 390740016. We acknowledge the support by the Stuttgart Center for Simulation Science (SimTech).

\bibliographystyle{splncs04}
\bibliography{main}


\end{document}